\documentclass[11pt]{article}
\usepackage[margin=1in]{geometry}
\usepackage[T1]{fontenc}
\usepackage{lmodern}
\usepackage{microtype}
\usepackage{amsmath,amssymb,amsthm}
\usepackage{booktabs}
\usepackage{graphicx}
\usepackage{xcolor}
\usepackage[numbers,sort&compress]{natbib}
\usepackage[colorlinks=true,linkcolor=blue!60!black,citecolor=blue!60!black,urlcolor=blue!60!black]{hyperref}
\usepackage{algorithm}
\usepackage{algpseudocode}
\usepackage{enumitem}
\usepackage{caption}
\setlist{nosep,leftmargin=*}

\newtheorem{proposition}{Proposition}
\newtheorem{corollary}{Corollary}
\theoremstyle{remark}

\newcommand{\chaosDirectJsd}{0.074}
\newcommand{\chaosMixJsd}{0.121}
\newcommand{\chaosSingleZeroJsd}{0.075}

\newcommand{\gvGrpoAcc}{91.8}
\newcommand{\gvGrpoAurc}{0.061}

\newcommand{\gvGrpoCovFive}{19}

\newcommand{\gvGrpodBrier}{\textminus{}0.015}

\newcommand{\gvRftklAcc}{91.4}

\newcommand{\gvRftklCovFive}{12}

\newcommand{\gvRftkldAurc}{\textminus{}0.041}
\newcommand{\gvRftkldBrier}{\textminus{}0.025}

\newcommand{\gvRlAnchordAcc}{+4.4}

\newcommand{\gvRlRodAcc}{+3.5}

\newcommand{\gvRlRodBrier}{\textminus{}0.057}

\newcommand{\gvSftAcc}{70.4}

\newcommand{\gvSftdAcc}{+21.8}

\newcommand{\gvSftdBrier}{\textminus{}0.247}

\newcommand{\gvTwoAcc}{92.2}
\newcommand{\gvTwoAurc}{0.034}

\newcommand{\gvTwoCovFive}{81}

\newcommand{\gvTwoT}{1.4}
\newcommand{\mpBaseAcc}{38.9}

\newcommand{\mpGrpodAcc}{+5.2}

\newcommand{\mpGrpodBrier}{\textminus{}0.095}
\newcommand{\mpGrpoforkT}{3.8}

\newcommand{\mpRftAcc}{32.7}

\newcommand{\mpRlAnchordBrier}{\textminus{}0.011}

\newcommand{\mpRlGrpoforkdAcc}{+1.8}

\newcommand{\mpRlGrpoforkdBrier}{\textminus{}0.035}

\newcommand{\mpRlRodBrier}{+0.002}

\newcommand{\mpSftCovTwenty}{16}

\newcommand{\mpSftdAcc}{+5.3}

\newcommand{\mpSftdBrier}{\textminus{}0.053}

\newcommand{\mpSoneT}{1.4}

\newcommand{\mpTwoCovTwenty}{28}

\newcommand{\E}{\mathbb{E}}
\newcommand{\Var}{\mathrm{Var}}

\newcommand{\rlcd}{RLCD}

\title{\textbf{OpenJev-RLCD: A Working RLCD Implementation}}
\author{Zhimin Gao and Pichao Wang\\ \texttt{zhimingao113@gmail.com; pichaowang@gmail.com}}
\date{}

\begin{document}
\maketitle

\begin{abstract}
Decision models such as Jev answer questions about an agent's state with probabilities, and those probabilities are
only useful if they are calibrated: a system that automates the decisions it is confident about must be right when it
says it is. The open-source reproductions of this idea are trained by supervised fine-tuning and repaired with temperature
scaling, while standard reinforcement learning from verifiable rewards makes reasoning models sharply overconfident.
We describe a working implementation of \emph{reinforcement learning for calibrated decisions} (\rlcd{}) for reasoning
models and the analysis that led to it. The model samples a rationale and we \emph{read} the answer distribution it
commits to afterwards; every rationale is scored by a strictly proper scoring rule of that distribution. A one-line
variance identity explains why the popular alternative---scoring the mixture of several samples---rewards rationales
for disagreeing and degenerates the reasoning, and shows that RLVR is exactly the mixture objective without its diversity
term. The per-rationale objective has a calibrated optimum, but optimized naively it either switches reasoning off or is
drowned out by high-variance policy gradients. The resulting recipe is \emph{calibrate, then reinforce}: first train
the post-rationale decision with a proper score on the model's own rationales, then optimize the reasoning with the
same proper score as reward. With Qwen3-1.7B on two real reasoning tasks (3 seeds, paired tests), \rlcd{} matches or beats
SFT, RFT/STaR and GRPO---each followed by temperature scaling---in accuracy and beats all of them in selective
prediction (area under the risk--coverage curve); on GSM8K answer verification a single \rlcd{} query can decide \gvTwoCovFive\% of the items at
$\le$5\% error, versus \gvGrpoCovFive\% for GRPO. On a task whose uncertainty is annotator disagreement, the same
objective provably cannot beat cross-entropy and, left alone, discovers that it should stop reasoning. We release code,
all run logs and an API-compatible server. Codes are available at: https://github.com/ZimmyGao/openjev-rlcd.
\end{abstract}

\section{Introduction}
\label{sec:intro}

A calibrated decision model returns, for a question about a state (``will this plan succeed?'', ``is this answer
correct?'', ``which option is right?''), a probability distribution over answers whose confidence matches its
accuracy. The value of such a model lies less in its accuracy than in what calibration enables: a downstream agent can
automate the decisions the model is sure about and escalate the rest, and it can do so from the model's own confidence,
without labels. TypeSafe AI's Jev~\citep{jev} markets exactly this capability under the name \emph{reinforcement
learning for calibrated decisions} (\rlcd{}), without disclosing a training algorithm. The open-source ecosystem that
followed (several hundred repositories within weeks) trains scorers by supervised fine-tuning (SFT) and fixes their
confidence post hoc with temperature scaling (TS)~\citep{guo2017calibration}.

Reinforcement learning is the natural tool for decisions whose outcomes are only observed after the fact, and it is
how reasoning models are trained today~\citep{shao2024deepseekmath}. But the reward used for reasoning---1 if the answer
is correct---is not a proper scoring rule of anything the model reports, and RL with it produces models that are
confidently wrong. A number of recent works therefore reward \emph{sets} of samples (duplicate penalties, collision
terms, group calibration rewards). This paper asks what a working \rlcd{} for reasoning models is, and answers it with
an analysis, a recipe and a controlled empirical study.

\paragraph{Contributions.}
\begin{itemize}
\item \textbf{A readout formulation and a variance identity (\S\ref{sec:method}).} We score the answer distribution
$u(r)$ that a reasoning model commits to after its own rationale $r$. For the Brier score,
$J(\E_r u) = \E_r J(u) + \Var_r(u)$: scoring the mixture of samples pays a bonus for rationales that disagree, whether
or not the disagreement is grounded. With one-hot answers the same identity writes the calibrated group reward as RLVR
plus a Gini-diversity bonus whose coefficient must be exactly one. Scoring each rationale's own distribution removes
the bonus and has the calibrated optimum $u(r)=q(x)$ for every rationale, so a single query is calibrated.
\item \textbf{Two failure modes and the recipe that avoids them (\S\ref{sec:failure}).} Optimized end to end, the
per-rationale objective either abandons reasoning (the ``System-One'' collapse; prevented by a KL anchor) or is
drowned by the policy-gradient term, whose gradient is 7--12$\times$ larger than the readout's and dominates the
optimizer. \emph{Calibrate, then reinforce}: train the readout first, then reinforce the reasoning with the proper score
as reward, anchored at the calibrated model.
\item \textbf{A controlled comparison on real tasks (\S\ref{sec:results}).} On MMLU-Pro and on GSM8K answer
verification, with three seeds and paired tests on every test item, \rlcd{} matches or beats SFT+TS, RFT/STaR+TS
(with and without KL) and GRPO+TS in accuracy and Brier score, and beats all of them in area under the risk--coverage
curve. Forked-checkpoint controls
show when reinforcing the reasoning helps (when reasoning is computation) and why its reward must be proper (a
correctness reward undoes the calibrated readout).
\item \textbf{Where RL cannot help (\S\ref{sec:aleatoric}).} When the uncertainty is annotator disagreement (ChaosNLI),
reasoning cannot reduce it; the per-rationale objective discovers this and turns reasoning off, and no RL variant beats
cross-entropy on the same labels.
\end{itemize}

\section{Setting}
\label{sec:setting}

A question $x$ has $K$ answer options and an outcome $Y\sim q(\cdot\mid x)$; $q$ is one-hot when there is a right
answer (epistemic uncertainty) and spread when humans disagree (aleatoric uncertainty). A reasoning model with
parameters $\theta$ samples a rationale $r\sim\pi_\theta(\cdot\mid x)$, which ends in the marker \texttt{Answer:}; we
then \emph{read} its next-token distribution over the $K$ option tokens,
\[
u_\theta(x,r) = \mathrm{softmax}\big(\text{logits}_\theta(x,r)[\text{option tokens}]\big) \in \Delta_K ,
\]
instead of sampling an answer from it. The model's decision distribution is the mixture $p_\theta(x)=\E_r
u_\theta(x,r)$; a single query returns $u_\theta(x,r)$ for one sampled rationale. We use the (shifted, negated) Brier
score $J(u,y) = 2u_y-\|u\|^2 = 1-\|u-e_y\|^2$, whose expectation $J(u,q)=\E_{Y\sim q}J(u,Y)=2u^\top q-\|u\|^2$ is
uniquely maximized at $u=q$~\citep{brier1950verification,gneiting2007strictly}; the log score behaves the same way in
our experiments.

\section{Method}
\label{sec:method}

\subsection{What to score: the mixture or each rationale}

\begin{proposition}[Variance decomposition]
\label{prop:identity}
For any distribution over rationales and any $q$, with $p=\E_r u(r)$,
\[
J(p,q) \;=\; \E_r\, J\big(u(r),q\big) \;+\; \E_r\big\|u(r)-p\big\|^2 .
\]
\end{proposition}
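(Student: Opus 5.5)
The plan is a direct expansion using the explicit form $J(u,q)=2u^\top q-\|u\|^2$ from \S\ref{sec:setting}. Only two features of $J$ matter. It is affine in $u$ apart from the quadratic term $-\|u\|^2$. Its linear part commutes with expectation over $r$. So the whole discrepancy between $J(p,q)$ and $\E_r J(u(r),q)$ comes from the quadratic term, and it should equal a variance.

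The steps, in order, are as follows.

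First, I write
\[
\E_r J\big(u(r),q\big)=2\,\E_r u(r)^\top q-\E_r\|u(r)\|^2 = 2p^\top q-\E_r\|u(r)\|^2 ,
\]
using linearity of expectation and $p=\E_r u(r)$.

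Second, I subtract this from $J(p,q)=2p^\top q-\|p\|^2$. The $q$-terms cancel, leaving
\[
J(p,q)-\E_r J\big(u(r),q\big)=\E_r\|u(r)\|^2-\|p\|^2 .
\]

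Third, I identify the right-hand side with $\E_r\|u(r)-p\|^2$ by the usual bias--variance expansion:
\[
\E_r\|u(r)-p\|^2=\E_r\|u(r)\|^2-2\,\E_r u(r)^\top p+\|p\|^2=\E_r\|u(r)\|^2-\|p\|^2 .
\]

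Rearranging gives the stated identity, and it holds for every $q$ and every distribution over rationales. Nothing about $q$ beyond its appearance in a linear term is used, so it need not be a valid distribution. In particular, taking $q=e_y$ yields the pointwise version $J(p,y)=\E_r J(u(r),y)+\E_r\|u(r)-p\|^2$.

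The only step needing care is integrability. The expectations are well defined because $u(r)\in\Delta_K$ is bounded, so all moments exist and the interchange of expectation with the inner product is justified. Beyond that there is no real obstacle; the content of the proposition is the observation that the Brier score's curvature, $-\|u\|^2$ with Hessian $-2I$, makes the Jensen gap exactly the trace of the covariance of $u(r)$. I will also remark that the same argument shows the gap is nonnegative, with equality iff $u(r)=p$ almost surely. This is the form used later when interpreting the term as a diversity bonus.
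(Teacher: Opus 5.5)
Your proof is correct and is the same argument the paper gives: $J$ is affine in $u$ apart from $-\|u\|^2$, so the gap is $\E_r\|u(r)\|^2-\|p\|^2=\E_r\|u(r)-p\|^2$. The paper compresses this into one line; your remarks on integrability and the equality case are correct but not needed.
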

\begin{proof}
$J(p,q)-\E_r J(u(r),q) = \E_r\|u(r)\|^2-\|p\|^2=\E_r\|u(r)-p\|^2$, since $J$ is affine in $u$ except for $-\|u\|^2$.
\end{proof}

\noindent The mixture objective $J(p,q)$---the objective behind scoring groups of samples---therefore equals the
average per-rationale score plus a bonus for rationales that disagree. Nothing in the bonus asks the disagreement to be
grounded: a policy can collect it by randomizing its conclusions. Two consequences follow.

\begin{corollary}[RLVR is the mixture objective without its diversity term]
\label{cor:rlvr}
If the answer is sampled, $u(r)=e_A$ with $A\sim p$, then $\E_A J(e_A,q)=2p^\top q-1 = 2\Pr(A=Y)-1$ is the RLVR
objective and $\Var = 1-\|p\|^2$ is the Gini impurity of $p$: the calibrated group reward is ``RLVR + Gini bonus''.
With any coefficient $\lambda<1$ on the bonus the optimum is no longer $p=q$; with $\lambda=0$ (RLVR) it is the mode.
\end{corollary}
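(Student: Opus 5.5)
The plan is to specialize Proposition~\ref{prop:identity} to the case where every rationale ends in a one-hot readout $u(r)=e_A$ with $A\sim p$, and then evaluate each of its three terms.

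\textbf{Step 1: the per-rationale term.} Since $\|e_A\|^2=1$, the definition $J(u,q)=2u^\top q-\|u\|^2$ gives $J(e_A,q)=2q_A-1$. Averaging over $A\sim p$ yields
\[
\E_A J(e_A,q)=2p^\top q-1 .
\]
Because $A$ and $Y\sim q$ are drawn independently, $p^\top q=\sum_k p_kq_k=\Pr(A=Y)$. So this term equals $2\Pr(A=Y)-1$, an increasing affine function of expected 0/1 correctness, which is exactly the RLVR objective.

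\textbf{Step 2: the diversity term.} Expand $\E_A\|e_A-p\|^2=\E_A\|e_A\|^2-\|p\|^2=1-\|p\|^2$, which is the Gini impurity of $p$. Proposition~\ref{prop:identity} then reads
\[
J(p,q)=(2p^\top q-1)+(1-\|p\|^2),
\]
that is, the calibrated group reward is RLVR plus the Gini bonus with coefficient exactly one. As a sanity check, the right-hand side simplifies to $2p^\top q-\|p\|^2$, matching the left-hand side.

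\textbf{Step 3: the coefficient $\lambda$.} Consider
\[
F_\lambda(p)=2p^\top q-1+\lambda(1-\|p\|^2)
\]
over the simplex $\Delta_K$. For $\lambda>0$ this function is strictly concave, and its maximizer is the Euclidean projection of $q/\lambda$ onto $\Delta_K$. Using the KKT conditions, the solution is $p_k=\max(q_k/\lambda-\tau,0)$, with $\tau$ chosen so the entries sum to one.
\begin{itemize}
\item At $\lambda=1$, $\tau=0$ gives $p=q$, recovering the calibrated optimum.
\item For $\lambda<1$, the point $p=q$ fails stationarity whenever $q$ is non-uniform. The gradient $2q-2\lambda q$ restricted to the tangent space $\{v:\sum_k v_k=0\}$ is $2(1-\lambda)(q-\bar q\mathbf 1)$, which is nonzero unless $q$ is uniform. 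At interior points this contradicts optimality. If $q$ has zero entries, a check of the KKT conditions handles that boundary case.
\item For $\lambda=0$, the objective is linear in $p$, so it is maximized at a vertex $e_k$ with $k\in\arg\max_k q_k$, which is the mode.
\end{itemize}

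The only delicate point is the $\lambda<1$ claim. It needs the caveat that $q$ is not uniform, and it should be stated as: the optimum is the sharpened projection of $q/\lambda$ rather than $q$. This is the step I would write out carefully; Steps~1 and~2 are direct substitutions.
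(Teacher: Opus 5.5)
Your Steps 1 and 2 are the specialization of Proposition~\ref{prop:identity} that the paper intends; it gives no separate proof of the corollary, and these steps are correct. For the $\lambda$ claim, the paper's only implicit argument is the rewriting used for Proposition~\ref{prop:single}, namely $F_\lambda(p)=(1-\lambda)(2p^\top q-1)+\lambda J(p,q)$. That is a linear term maximized at the mode plus a strictly proper term maximized at $q$. It shows which way the optimum moves, but not that it moves at all. Your completion of the square, $F_\lambda(p)=-\lambda\|p-q/\lambda\|^2+\mathrm{const}$, identifies the optimum exactly and is the stronger argument.

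What you need to fix is the caveat. The boundary KKT check you deferred enlarges the exception set: for $0\le\lambda<1$, $p=q$ is optimal if and only if $q$ is uniform on its support. At $p=q$ the gradient is $2(1-\lambda)q$. KKT requires $2(1-\lambda)q_k=\mu$ on $\mathrm{supp}(q)$ and $0\le\mu$ off it, and the second condition holds automatically. Equivalently, for $\lambda>0$, projecting $q/\lambda$ with $q$ uniform on $s$ options gives $\tau=(1/\lambda-1)/s$ and returns $q$.

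This matters for the paper's epistemic case, where $q$ is one-hot. There the mode is $q$ itself, so RLVR and every $0\le\lambda\le 1$ share the calibrated population optimum. ``The optimum is no longer $p=q$'' therefore holds only for $q$ that is spread and non-uniform on its support, which is the aleatoric setting. Separately, the statement formally allows $\lambda<0$. There the bonus is $\le 0$ with equality only at vertices, so the optimum is a mode vertex.
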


\begin{proposition}[Per-rationale objective]
\label{prop:single}
Let $J_\lambda(\theta) = \E_x\E_r J(u_\theta(x,r),q(x)) + \lambda\,\E_x\Var_r(u_\theta)$. If the outcome is independent of
the model's own rationale given the question ($Y\perp r\mid x$, true whenever $Y$ is produced by the world), then for
every $\lambda<1$ the unique maximizer over readouts is $u(x,r)=q(x)$ for every rationale $r$; for $\lambda=1$ every
readout with $\E_r u(x,r)=q(x)$ is optimal.
\end{proposition}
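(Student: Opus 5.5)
The plan is to reduce everything to Proposition~\ref{prop:identity}, applied question by question. The objective is an expectation over $x$, and the readout $u(x,\cdot)$ can be chosen separately for each $x$ (we optimize over readouts, not parameters). So it suffices to fix $x$ and maximize
\[
F_\lambda(u) \;=\; \E_r J\big(u(r),q\big) + \lambda\,\E_r\|u(r)-p\|^2, \qquad p=\E_r u(r),
\]
over measurable maps $r\mapsto u(r)\in\Delta_K$, with $\pi(\cdot\mid x)$ fixed. The hypothesis $Y\perp r\mid x$ enters exactly here. Because the conditional law of $Y$ given $(x,r)$ is still $q(x)$, the expected score of rationale $r$ is $J(u(r),q(x))$ with the same $q$ for every $r$. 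Without the hypothesis, $q$ would have to be replaced by $q(\cdot\mid x,r)$ and the optimum would change.

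Next I rewrite $F_\lambda$ with the identity. Proposition~\ref{prop:identity} gives $\E_r J(u(r),q)=J(p,q)-\E_r\|u(r)-p\|^2$, so
\[
F_\lambda(u) \;=\; J(p,q) \;-\; (1-\lambda)\,\E_r\|u(r)-p\|^2 .
\]
This splits the objective into a term depending only on the mixture $p$ and a dispersion penalty.

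Case $\lambda<1$: both terms are maximized at once.
\begin{itemize}
\item $J(p,q)=\|q\|^2-\|p-q\|^2\le\|q\|^2$, with equality iff $p=q$, by strict propriety of the Brier score.
\item $-(1-\lambda)\E_r\|u(r)-p\|^2\le 0$, with equality iff $u(r)=p$ for $\pi$-almost every $r$.
\end{itemize}
So the supremum $\|q\|^2$ is attained exactly when $u(r)=q$ almost surely, and the constant readout $u\equiv q$ achieves it.

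Case $\lambda=1$: the penalty vanishes and $F_1(u)=J(p,q)$. Every readout with $\E_r u(r)=q$ is therefore optimal, and nothing else is.

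The only real obstacle is being precise about what ``unique'' and ``maximizer over readouts'' mean. Uniqueness holds only $\pi_\theta$-almost surely in $r$, since rationales of probability zero are unconstrained. The maximization is over the readout map with the rationale distribution held fixed, or with it free, since the argument holds for any fixed $\pi$. I would state the result in that form, noting that for $\lambda<1$ the optimum does not depend on $\pi$ at all. That is why a single query is calibrated regardless of how the reasoning is distributed.
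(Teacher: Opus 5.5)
Your proof is correct and is essentially the paper's: both rewrite $J_\lambda$ with Proposition~\ref{prop:identity}, observe that two terms are maximized simultaneously by the constant readout $u\equiv q$, and get uniqueness from the equality conditions. The only difference is the grouping. The paper writes $J_\lambda=(1-\lambda)\E_r J(u(r),q)+\lambda J(p,q)$, whose second term is maximized at $p=q$ only when $\lambda\ge 0$; your form $J(p,q)-(1-\lambda)\E_r\|u(r)-p\|^2$ handles every $\lambda<1$, including negative $\lambda$, in one stroke.
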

\begin{proof}
$J_\lambda=(1-\lambda)\E_r J(u(r),q)+\lambda J(p,q)$ by Proposition~\ref{prop:identity}. Both terms are maximized by
$u(r)\equiv q$; the first is strictly concave in each $u(r)$, which gives uniqueness for $\lambda<1$.
\end{proof}

\noindent We therefore use the per-rationale objective ($\lambda=0$): a single query is calibrated at the optimum,
and the objective no longer pays for disagreement. Rationales still matter---a better rationale lets a
finite-capacity model compute $q(x)$ more accurately---but they are never rewarded for being random.

\paragraph{Estimator.} With $M$ rationales per question and one outcome $Y$, write $R_i=J(u_\theta(x,r_i),Y)$. The
gradient of $\E_r J$ splits into a pathwise part through the readout and a score-function part for the rationales,
\[
\widehat{\nabla} = \frac1M\sum_{i=1}^M \nabla_\theta R_i \;+\; \frac1M\sum_{i=1}^M \Big(R_i-\frac{1}{M-1}\sum_{j\neq i}R_j\Big)\nabla_\theta\log\pi_\theta(r_i\mid x),
\]
with a leave-one-out baseline~\citep{kool2019buy,ahmadian2024back}. The estimator is unbiased; so are its variants with
a KL penalty or per-rationale costs added as shaping terms, and the Rao-Blackwellized estimator of the mixture
objective (Appendix~\ref{app:estimators}). We verified all of them against exact enumeration on tabular models
(maximum error $<10^{-15}$). Reading $u$ instead of sampling the answer is a Rao-Blackwellization of the
sample-only estimator~\citep{mohamed2020monte}; with an empty rationale it reduces to training a direct scorer with the
Brier score.

\subsection{Two ways the per-rationale objective fails}
\label{sec:failure}

\paragraph{It switches reasoning off.} Reasoning models conclude their rationales decisively, so $u$ after a rationale
is nearly one-hot and the Brier score punishes every confident mistake. Hedging after an \emph{empty} rationale is
learned faster than hedging after reasoning, and the policy gradient moves all mass to the empty rationale. On MMLU-Pro
this happens within about 100 steps (Figure~\ref{fig:collapse}, red): all four rationales become the bare marker and
the model turns into a direct scorer. A KL anchor to the reasoning
model ($\beta=0.04$, blue) prevents the collapse and lets the model learn to hedge \emph{after} reasoning.

\paragraph{It is drowned by the policy gradient.} Even with the anchor, adding the score-function term to training
from scratch makes every metric worse than training the readout alone, monotonically in its weight
(Table~\ref{tab:sfcoef}). The gradient norm of the score-function term is 7--12$\times$ that of the pathwise term
(RMS over training), so it dominates Adam's second-moment estimate: the useful, low-variance readout signal is taken
in steps sized by the noise. Its direction is also unhelpful early on: while the readout is still miscalibrated, the
reward mostly teaches rationales to sound certain.

\begin{figure}[t]
\centering
\includegraphics[width=\linewidth]{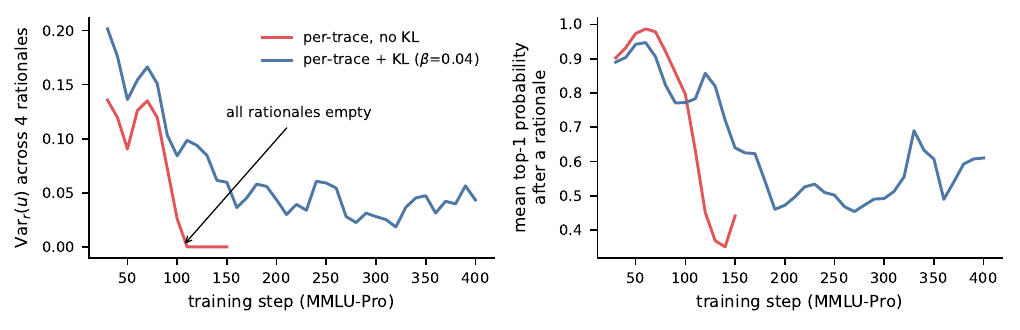}
\caption{\textbf{The System-One collapse} (MMLU-Pro, per-rationale objective, trained end to end). Without a KL anchor
the four rationales per question become identical and empty after $\sim$100 steps (left: variance of $u$ across
rationales) while the readout learns to hedge (right). With a KL anchor the model keeps reasoning and learns to hedge
after it. The no-KL run was stopped at step 150, after the collapse.}
\label{fig:collapse}
\end{figure}

\subsection{The recipe: calibrate, then reinforce}

\begin{algorithm}[t]
\caption{\rlcd{} for a reasoning model (one step; $M$ rationales for each question in the batch)}
\label{alg:rlcd}
\begin{algorithmic}[1]
\State sample $r_1,\dots,r_M\sim\pi_\theta(\cdot\mid x)$ until \texttt{Answer:}; read $u_i=u_\theta(x,r_i)$; observe $Y$
\State $R_i\gets J(u_i,Y)$ \Comment{any strictly proper score of the read-out distribution}
\If{stage 1 (calibrate the decision)}
  \State $\mathcal{L}\gets-\frac1M\sum_i R_i$ \Comment{pathwise only: rationales are sampled on-policy but not reinforced}
\Else{ stage 2 (reinforce the reasoning), starting from the stage-1 model $\theta_1$}
  \State $A_i\gets \frac1M\big(R_i-\beta\log\frac{\pi_\theta(r_i)}{\pi_{\theta_1}(r_i)}\big)-\text{leave-one-out mean}$
  \State $\mathcal{L}\gets-\frac1M\sum_i R_i - c\sum_i \mathrm{sg}(A_i)\log\pi_\theta(r_i\mid x)$ \Comment{$c=0.3$, $\beta=0.04$}
\EndIf
\State update $\theta$ with Adam on $\mathcal{L}$
\end{algorithmic}
\end{algorithm}

Algorithm~\ref{alg:rlcd} separates the two roles. Stage~1 trains the decision the model reports after its own reasoning
with a proper score; it is the pathwise half of the \rlcd{} gradient, i.e.\ the deterministic policy gradient of the
proper-score reward with respect to the reported distribution, with the rationale treated as an on-policy latent
variable. Stage~2 then reinforces the reasoning with the \emph{same} proper score as reward, a down-weighted
score-function term, and a KL anchor at the calibrated model, so that the reward is informative and the anchor protects
what stage 1 learned. Both stages observe exactly the outcome $Y$ of each question, like every baseline.

\section{Experimental setup}
\label{sec:setup}

\paragraph{Tasks.} \emph{MMLU-Pro}~\citep{wang2024mmlupro}: 10-option questions, a random 4000/500/1500 split of the
12k-question benchmark (we evaluate on the first 1000 test and 300 dev questions); the task is knowledge-heavy. \emph{GSM8K-Verify}: a yes/no decision built from
GSM8K~\citep{cobbe2021training}---``is this proposed final answer correct?''---where the proposal is a sampled solution
of Qwen3-0.6B (53\% of the proposals are correct on train, 50\% on test); train/dev come from GSM8K-train
(4000/300 used), test = all 1260 GSM8K-test problems with a parsable proposal. Reasoning is essential: on a 400-item
probe the base model reaches 56\% answering directly and 87\% after reasoning.
\emph{ChaosNLI}~\citep{nie2020learn}: MNLI items with 100 annotator labels each (199 dev, 400 test items); every
visit to a training item draws one fresh annotator label, and the target is the human label distribution.

\paragraph{Model and methods.} All methods start from Qwen3-1.7B~\citep{qwen3} in non-thinking mode, use AdamW with
learning rate $2\cdot10^{-6}$, 8 questions and $M=4$ rationales per step, and see the same sequence of questions and
outcomes for a given seed. \textbf{SFT+TS}: the same model answers directly and is fine-tuned with cross-entropy on the
label token (400 or 800 steps; 800 steps did not improve on 400 on either task, so we report 400).
\textbf{RFT/STaR+TS}~\citep{zelikman2022star}: rejection-sampling fine-tuning on the trajectories whose sampled answer is
correct (rationale and answer tokens), with and without the KL anchor. \textbf{GRPO+TS}~\citep{shao2024deepseekmath}:
correctness reward, group-normalized advantages, KL anchor $\beta=0.04$. \textbf{\rlcd{}}: stage~1 (400 steps, Brier or
log score), and stage~2 (400 more steps from the stage-1 checkpoint). All baselines receive the same temperature
scaling, fitted by grid search on the dev split.

\paragraph{Metrics and statistics.} Accuracy, Brier score, NLL and ECE of the single-query readout after temperature
scaling; the area under the risk--coverage curve (AURC; items ranked by confidence, risk = error rate); Cov@$\epsilon$,
the largest fraction of items, ranked by confidence, whose error rate is at most $\epsilon$~\citep{geifman2017selective};
and the fitted temperature $T$ ($T\approx1$: calibrated as trained). Everything is averaged over three seeds; tests are
paired over test items (per-item values averaged over seeds), with 95\% bootstrap intervals and two-sided sign-flip
permutation $p$-values for means. $\ast$: $p<0.05$, $\ast\ast$: $p<0.01$.

\section{Results}
\label{sec:results}

\subsection{Main comparison}

\begin{table}[t]
\centering
\small
\setlength{\tabcolsep}{3.2pt}
\resizebox{\linewidth}{!}{\begin{tabular}{l ccccc c ccccc}
\toprule
 & \multicolumn{5}{c}{GSM8K-Verify (reasoning-essential)} & & \multicolumn{5}{c}{MMLU-Pro (knowledge-heavy)} \\
\cmidrule(lr){2-6}\cmidrule(lr){8-12}
Method & Acc$\uparrow$ & Brier$\downarrow$ & AURC$\downarrow$ & Cov@5\%$\uparrow$ & $T$ & & Acc$\uparrow$ & Brier$\downarrow$ & AURC$\downarrow$ & Cov@20\%$\uparrow$ & $T$ \\
\midrule
base (zero-shot reasoning) & 84.1 & 0.211 & 0.073 & 14.9 & 9.1 &  & 38.9 & 0.786 & 0.516 & 0.0 & 10.9 \\
SFT + TS (direct answer) & 70.4{\scriptsize$\pm$0.2} & 0.382{\scriptsize$\pm$0.009} & 0.174{\scriptsize$\pm$0.010} & 6.2{\scriptsize$\pm$2.0} & 2.1 &  & 42.3{\scriptsize$\pm$0.4} & 0.711{\scriptsize$\pm$0.002} & 0.386{\scriptsize$\pm$0.005} & 15.6{\scriptsize$\pm$0.7} & 1.6 \\
RFT/STaR + TS & 89.0{\scriptsize$\pm$1.6} & 0.207{\scriptsize$\pm$0.020} & 0.103{\scriptsize$\pm$0.013} & 0.9{\scriptsize$\pm$0.7} & 7.3 &  & 32.7{\scriptsize$\pm$0.9} & 0.830{\scriptsize$\pm$0.004} & 0.619{\scriptsize$\pm$0.011} & 0.0{\scriptsize$\pm$0.0} & 17.6 \\
RFT/STaR + KL + TS & 91.4{\scriptsize$\pm$0.5} & 0.160{\scriptsize$\pm$0.006} & 0.075{\scriptsize$\pm$0.010} & 11.6{\scriptsize$\pm$8.1} & 6.6 &  & 40.9{\scriptsize$\pm$0.6} & 0.761{\scriptsize$\pm$0.006} & 0.493{\scriptsize$\pm$0.008} & 0.0{\scriptsize$\pm$0.0} & 11.0 \\
GRPO + KL + TS & 91.8{\scriptsize$\pm$0.6} & 0.150{\scriptsize$\pm$0.009} & 0.061{\scriptsize$\pm$0.006} & 19.4{\scriptsize$\pm$7.3} & 7.2 &  & 42.4{\scriptsize$\pm$0.5} & 0.753{\scriptsize$\pm$0.006} & 0.481{\scriptsize$\pm$0.015} & 0.0{\scriptsize$\pm$0.0} & 11.1 \\
\midrule
RLCD stage 1, Brier readout & 90.4{\scriptsize$\pm$2.0} & 0.164{\scriptsize$\pm$0.020} & 0.047{\scriptsize$\pm$0.010} & 62.6{\scriptsize$\pm$21.3} & 1.6 &  & 47.0{\scriptsize$\pm$1.3} & 0.667{\scriptsize$\pm$0.014} & 0.312{\scriptsize$\pm$0.019} & 27.3{\scriptsize$\pm$2.1} & 1.4 \\
RLCD stage 1, log-score readout & 91.3{\scriptsize$\pm$0.4} & 0.149{\scriptsize$\pm$0.009} & 0.038{\scriptsize$\pm$0.002} & 72.2{\scriptsize$\pm$7.2} & 1.8 &  & 46.5{\scriptsize$\pm$0.7} & 0.668{\scriptsize$\pm$0.009} & 0.319{\scriptsize$\pm$0.012} & 24.8{\scriptsize$\pm$1.6} & 1.5 \\
RLCD two-stage (stage 1 + RLCD-RL) & \textbf{92.2{\scriptsize$\pm$0.7}} & \textbf{0.135{\scriptsize$\pm$0.009}} & \textbf{0.034{\scriptsize$\pm$0.001}} & \textbf{81.1{\scriptsize$\pm$7.2}} & 1.4 &  & \textbf{47.7{\scriptsize$\pm$2.0}} & \textbf{0.658{\scriptsize$\pm$0.015}} & \textbf{0.304{\scriptsize$\pm$0.022}} & \textbf{27.6{\scriptsize$\pm$3.3}} & 1.6 \\
\bottomrule
\end{tabular}
}
\caption{\textbf{Single-query decisions after temperature scaling} (mean $\pm$ s.d.\ over 3 seeds). Cov@$\epsilon$:
fraction of test items that can be decided automatically at error rate $\le\epsilon$ (5\% for GSM8K-Verify, 20\% for
MMLU-Pro, whose error rates are higher). $T$: temperature fitted on dev (1 = calibrated as trained). Best in bold.}
\label{tab:main}
\end{table}

\begin{figure}[t]
\centering
\includegraphics[width=\linewidth]{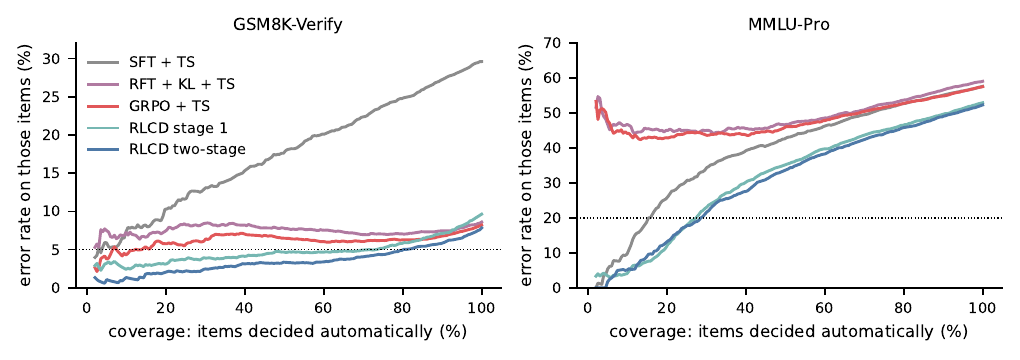}
\caption{\textbf{Risk--coverage} of single-query decisions (after temperature scaling, averaged over seeds). Dotted:
the error budget used for Cov@$\epsilon$ in Table~\ref{tab:main}. Temperature scaling cannot repair the ranking of
overconfident models (GRPO, RFT), whose error rate is flat in coverage.}
\label{fig:rc}
\end{figure}

Table~\ref{tab:main} and Figure~\ref{fig:rc} summarize the comparison. On GSM8K-Verify, where reasoning is essential,
SFT+TS reaches only \gvSftAcc\% accuracy, and two-stage \rlcd{} improves on it by \gvSftdAcc\ points and
\gvSftdBrier\ Brier. The reasoning-based baselines are accurate but miscalibrated in ranking: after temperature scaling
GRPO matches \rlcd{} in accuracy (\gvGrpoAcc\% vs.\ \gvTwoAcc\%, n.s.) and is close in Brier score (difference
\gvGrpodBrier, $p=0.09$), yet its confidence separates right from wrong much worse (AURC \gvGrpoAurc\ vs.\ \gvTwoAurc,
paired bootstrap $p<10^{-3}$), so at a 5\% error budget it
can decide \gvGrpoCovFive\% of the items against \gvTwoCovFive\% for \rlcd{}. RFT behaves like GRPO; with the KL anchor
it reaches \gvRftklAcc\% accuracy but still decides only \gvRftklCovFive\% of the items at 5\% error (two-stage \rlcd{}
vs.\ RFT+KL: Brier \gvRftkldBrier, AURC \gvRftkldAurc, $p<0.01$). The fitted temperatures tell the same story from the
other side: GRPO and RFT need $T\approx7$, \rlcd{} $T=\gvTwoT$.

On MMLU-Pro every reasoning baseline fails to improve on SFT+TS after temperature scaling, and without the KL anchor
RFT collapses to empty rationales in all three seeds (\mpRftAcc\% accuracy, below the \mpBaseAcc\% of the base model).
\rlcd{} improves accuracy by \mpSftdAcc\ points over SFT+TS and by \mpGrpodAcc\ over GRPO+TS, with Brier
\mpSftdBrier\ and \mpGrpodBrier\ (all $p<10^{-3}$). At a 20\% error budget it can decide \mpTwoCovTwenty\% of the
items, against \mpSftCovTwenty\% for SFT+TS and none for any reasoning baseline, whose confidence is uninformative after
temperature scaling. Stage~1 alone already carries most of the gain on both tasks; the choice of proper score for the
readout matters little (Brier vs.\ log score).

\subsection{What reinforcing the reasoning adds}
\label{sec:stage2}

\begin{table}[t]
\centering
\small
\setlength{\tabcolsep}{4pt}
\resizebox{\linewidth}{!}{\begin{tabular}{l ccc ccc}
\toprule
 & \multicolumn{3}{c}{GSM8K-Verify} & \multicolumn{3}{c}{MMLU-Pro} \\
\cmidrule(lr){2-4}\cmidrule(lr){5-7}
Stage-2 continuation (400 steps, same checkpoint) & $\Delta$Acc & $\Delta$Brier & $\Delta$AURC & $\Delta$Acc & $\Delta$Brier & $\Delta$AURC \\
\midrule
readout-only continuation & \textminus{}1.7$^{\ast\ast}$ & +0.028$^{\ast\ast}$ & +0.011$^{\ast}$ & +0.5 & \textminus{}0.010 & \textminus{}0.011 \\
anchor only (REINFORCE noise, no reward) & \textminus{}2.6$^{\ast\ast}$ & +0.031$^{\ast\ast}$ & +0.016$^{\ast\ast}$ & \textminus{}0.1 & +0.002 & +0.007 \\
GRPO (correctness reward) & +1.8$^{\ast\ast}$ & \textminus{}0.028$^{\ast\ast}$ & \textminus{}0.012$^{\ast\ast}$ & \textminus{}1.1 & +0.026$^{\ast\ast}$ & +0.033$^{\ast\ast}$ \\
RLCD-RL (proper-score reward) & +1.8$^{\ast\ast}$ & \textminus{}0.029$^{\ast\ast}$ & \textminus{}0.013$^{\ast\ast}$ & +0.6 & \textminus{}0.009 & \textminus{}0.009 \\
\bottomrule
\end{tabular}
}
\caption{\textbf{Stage-2 continuations forked from the same stage-1 checkpoint} (3 seeds), change relative to the
checkpoint (single query, after TS). Anchor only: the same score-function noise and KL anchor as RLCD-RL but no
outcome reward.}
\label{tab:stage2}
\end{table}

\begin{figure}[t]
\centering
\includegraphics[width=\linewidth]{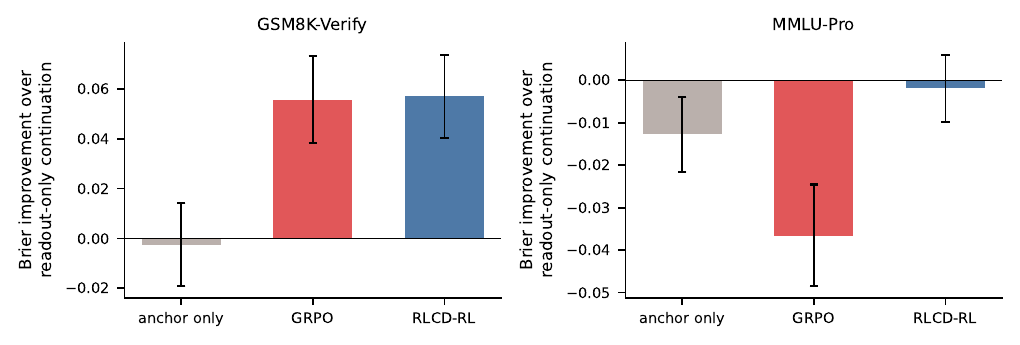}
\caption{\textbf{Stage-2 reward matters.} Brier improvement over continuing to train the readout only, for the same
400-step budget from the same checkpoint (95\% bootstrap intervals over test items, 3 seeds pooled).}
\label{fig:stage2}
\end{figure}

To isolate stage~2, we saved the stage-1 model for each seed and forked four 400-step continuations from it with the
same data stream (Table~\ref{tab:stage2}, Figure~\ref{fig:stage2}). Three findings are consistent across seeds.
\begin{itemize}
\item \textbf{Where reasoning is computation, reinforcing it helps.} On GSM8K-Verify, continuing to train the readout
alone \emph{degrades} the model, whereas RLCD-RL improves it; against
the readout-only continuation, RLCD-RL gains \gvRlRodAcc\ accuracy points and \gvRlRodBrier\ Brier ($p<10^{-3}$, all
three seeds individually significant).
\item \textbf{The outcome reward, not the anchor, does the work.} The anchor-only control has the same score-function
noise and KL anchor but no reward; it is worse than RLCD-RL (GSM8K-Verify \gvRlAnchordAcc\ points, MMLU-Pro Brier
\mpRlAnchordBrier, both $p<0.01$) and unstable across seeds (it collapsed in one of them).
\item \textbf{The reward must be proper.} Continuing with GRPO's correctness reward is as good as RLCD-RL on
GSM8K-Verify, but on MMLU-Pro, where RL cannot raise accuracy much, it re-inflates the confidence of the calibrated
readout ($T$ from \mpSoneT\ to \mpGrpoforkT) and makes Brier score and AURC significantly worse than the checkpoint; RLCD-RL beats it by
\mpRlGrpoforkdAcc\ accuracy points and \mpRlGrpoforkdBrier\ Brier ($p<10^{-3}$).
\end{itemize}
On MMLU-Pro, reinforcing the reasoning with the proper score is neutral on average (difference to the readout-only
continuation \mpRlRodBrier\ Brier, n.s.; the sign varies by seed): the benefit of stage~2 is specific to tasks where
better reasoning changes the answer.

\subsection{Aleatoric uncertainty: when RL cannot help}
\label{sec:aleatoric}

\begin{table}[t]
\centering
\small
\resizebox{\linewidth}{!}{\begin{tabular}{l ccc l}
\toprule
ChaosNLI (100 annotators / item), Qwen3-1.7B & JSD to humans$\downarrow$ & Majority acc$\uparrow$ & seeds & rationales \\
\midrule
Direct answer, CE on the label stream (SFT) & 0.074 & 66.2 & 3 & --- \\
Mixture objective $\lambda{=}1$, 16-rationale mixture & 0.121 & 63.1 & 3 & degenerate (hits length limit) \\
Mixture objective $\lambda{=}1$, single rationale & 0.342 & 55.4 & 3 & degenerate \\
Mixture objective $\lambda{=}1$ + KL, mixture & 0.149 & 67.8 & 1 & healthy (48 tokens) \\
Per-trace objective $\lambda{=}0$, no KL & 0.075 & 63.0 & 1 & empty (System-One) \\
\bottomrule
\end{tabular}
}
\caption{\textbf{ChaosNLI} (400 test items). The target is the distribution of 100 human labels; every method sees one
fresh annotator label per visit.}
\label{tab:chaos}
\end{table}

If $q(x)$ is spread because annotators disagree, reasoning cannot reduce the uncertainty, and with an explicit readout
the \rlcd{} gradient equals the Brier gradient in expectation, so it cannot beat supervised training on the same
labels. ChaosNLI confirms both halves (Table~\ref{tab:chaos}). The mixture objective ($\lambda=1$) calibrates its
16-sample mixture (JSD \chaosMixJsd) by making the rationales random---they run to the length limit and a single
rationale is badly calibrated---as Proposition~\ref{prop:identity} predicts. The per-rationale objective instead
discovers that reasoning is useless here and switches it off, reaching the same quality as cross-entropy on direct
answers (JSD \chaosSingleZeroJsd\ vs.\ \chaosDirectJsd). We regard this as the correct behaviour of a decision
objective, and as a reason to prefer a fast direct scorer when uncertainty is aleatoric.

\subsection{Ablations}
\label{sec:ablations}

\begin{table}[t]
\centering
\small
\begin{tabular}{l cccc}
\toprule
REINFORCE weight $c$ (from scratch) & Acc$\uparrow$ & Brier$\downarrow$ & AURC$\downarrow$ & rationale tokens \\
\midrule
0 (readout only) & 46.3 & 0.676 & 0.323 & 182 \\
0.1 & 45.0 & 0.677 & 0.327 & 155 \\
0.3 & 44.6 & 0.694 & 0.352 & 148 \\
1 (unbiased) & 43.7 & 0.703 & 0.354 & 178 \\
\bottomrule
\end{tabular}

\caption{\textbf{Weight of the score-function term when training from scratch} (MMLU-Pro, seed 17, per-rationale
objective with KL anchor, single query after TS). More policy gradient is monotonically worse; this motivates
separating the stages.}
\label{tab:sfcoef}
\end{table}

\paragraph{Score-function weight.} Table~\ref{tab:sfcoef}: when the stages are not separated, every positive weight on
the score-function term hurts, in line with the gradient-dilution diagnosis of \S\ref{sec:failure}.
\paragraph{Readout warm-up alone is not enough.} Training the readout for 200 steps and then reinforcing for 200 steps
(both from scratch) ties the readout-only model; the gains of Table~\ref{tab:stage2} require a converged readout.
\paragraph{Who scores the rationale.} Scoring rationales with a frozen copy of the calibrated readout instead of the
policy's own (to prevent co-adaptation) was worse on GSM8K-Verify (seed 17: Brier $+0.021$, AURC $+0.030$): the reasoning shortened
and the policy's own readout became overconfident.
\paragraph{No KL anchor in stage 2.} The reasoning degenerates after $\sim$330 steps (the model stops writing the
marker and repeats its conclusion to the length limit); the readout-based reward is blind to the format of the
rationale, so an anchor is necessary.
\paragraph{Explicit-probability RLCD.} For a model without rationales (a direct scorer), \rlcd{} with an explicit
distribution has the same expected gradient as the Brier loss; on CLINC150 and ChaosNLI it never beat cross-entropy
(CLINC150 accuracy 94.6\% vs.\ 96.3\%), and RLVR on the same models collapsed to the mode.

\section{Related work}
\label{sec:related}

\paragraph{Calibration of language models.} Post-hoc temperature scaling~\citep{guo2017calibration} and verbalized or
sampled confidence~\citep{kadavath2022language,tian2023just} calibrate the \emph{level} of confidence but not its
ranking; selective prediction~\citep{geifman2017selective} evaluates exactly the ranking, which is where the methods in
Table~\ref{tab:main} differ most. Training with proper scores~\citep{brier1950verification,gneiting2007strictly} is
standard for classifiers; our contribution is where to apply it in a reasoning model.

\paragraph{RL for reasoning and calibration.} RLVR and GRPO~\citep{shao2024deepseekmath} optimize the probability of a
correct sampled answer, which Corollary~\ref{cor:rlvr} identifies as the mixture objective without its diversity term;
REINFORCE with leave-one-out baselines~\citep{williams1992simple,kool2019buy,ahmadian2024back} is our estimator.
Concurrent work rewards verbalized confidence with the Brier score~\citep{damani2025beyond}; we instead score the
answer distribution the model actually assigns, and analyze group rewards through Proposition~\ref{prop:identity}.

\paragraph{Rationales as latent variables.} Stage~1 resembles the answer-likelihood term of latent-rationale training
(TRICE, LaTRO)~\citep{phan2023training,chen2024language} and STaR~\citep{zelikman2022star}, which maximize the
likelihood of the correct answer; the difference is the proper score of the full distribution, which is what makes
the reported confidence usable. The RFT/STaR baseline in Table~\ref{tab:main} shows that likelihood-based cloning of
correct trajectories inherits the overconfidence of RLVR.

\section{Limitations}
\label{sec:limits}

All experiments use a 1.7B model, 400--800 update steps and three seeds; the effects are large relative to the seed
variance, but larger models and longer training may change the balance between the two stages. GSM8K-Verify is a
task we constructed from GSM8K with a 0.6B proposer. We compare with Jev only behaviorally (the API is reproduced; the
training data are not public), and claim nothing about Jev's internal algorithm. Stage~2 helps on the
reasoning-essential task and is neutral on the knowledge-heavy one; we do not yet know how to predict its benefit
without running it.

\section{Conclusion}

A working \rlcd{} for reasoning models scores what the model actually decides---the answer distribution after its own
rationale---with a proper score, first to calibrate that decision and then to reinforce the reasoning that feeds it.
Scoring groups of samples instead rewards disagreement, and a correctness reward undoes calibration; both are visible in
one identity. The payoff is not a few points of accuracy but decisions that can be delegated: at the same accuracy as
GRPO, a single \rlcd{} query knows which of its answers to trust.

\bibliographystyle{plainnat}
\bibliography{references}

@misc{jev,
  author       = {{TypeSafe AI}},
  title        = {Jev: calibrated decisions for AI agents},
  year         = {2026},
  howpublished = {Product documentation, \url{https://typesafe.ai}},
}

@article{brier1950verification,
  author  = {Brier, Glenn W.},
  title   = {Verification of forecasts expressed in terms of probability},
  journal = {Monthly Weather Review},
  volume  = {78},
  number  = {1},
  pages   = {1--3},
  year    = {1950},
}

@article{gneiting2007strictly,
  author  = {Gneiting, Tilmann and Raftery, Adrian E.},
  title   = {Strictly proper scoring rules, prediction, and estimation},
  journal = {Journal of the American Statistical Association},
  volume  = {102},
  number  = {477},
  pages   = {359--378},
  year    = {2007},
}

@inproceedings{guo2017calibration,
  author    = {Guo, Chuan and Pleiss, Geoff and Sun, Yu and Weinberger, Kilian Q.},
  title     = {On calibration of modern neural networks},
  booktitle = {International Conference on Machine Learning (ICML)},
  year      = {2017},
}

@article{williams1992simple,
  author  = {Williams, Ronald J.},
  title   = {Simple statistical gradient-following algorithms for connectionist reinforcement learning},
  journal = {Machine Learning},
  volume  = {8},
  pages   = {229--256},
  year    = {1992},
}

@inproceedings{kool2019buy,
  author    = {Kool, Wouter and van Hoof, Herke and Welling, Max},
  title     = {Buy 4 {REINFORCE} samples, get a baseline for free!},
  booktitle = {ICLR Workshop on Deep Reinforcement Learning Meets Structured Prediction},
  year      = {2019},
}

@inproceedings{ahmadian2024back,
  author    = {Ahmadian, Arash and Cremer, Chris and Gall{\'e}, Matthias and Fadaee, Marzieh and Kreutzer, Julia and Pietquin, Olivier and {\"U}st{\"u}n, Ahmet and Hooker, Sara},
  title     = {Back to basics: Revisiting {REINFORCE}-style optimization for learning from human feedback in {LLMs}},
  booktitle = {Annual Meeting of the Association for Computational Linguistics (ACL)},
  year      = {2024},
}

@article{mohamed2020monte,
  author  = {Mohamed, Shakir and Rosca, Mihaela and Figurnov, Michael and Mnih, Andriy},
  title   = {Monte {Carlo} gradient estimation in machine learning},
  journal = {Journal of Machine Learning Research},
  volume  = {21},
  number  = {132},
  pages   = {1--62},
  year    = {2020},
}

@article{shao2024deepseekmath,
  author  = {Shao, Zhihong and Wang, Peiyi and Zhu, Qihao and Xu, Runxin and Song, Junxiao and Zhang, Mingchuan and Li, Y. K. and Wu, Y. and Guo, Daya},
  title   = {{DeepSeekMath}: Pushing the limits of mathematical reasoning in open language models},
  journal = {arXiv preprint arXiv:2402.03300},
  year    = {2024},
}

@inproceedings{zelikman2022star,
  author    = {Zelikman, Eric and Wu, Yuhuai and Mu, Jesse and Goodman, Noah D.},
  title     = {{STaR}: Bootstrapping reasoning with reasoning},
  booktitle = {Advances in Neural Information Processing Systems (NeurIPS)},
  year      = {2022},
}

@inproceedings{phan2023training,
  author    = {Phan, Du and Hoffman, Matthew D. and Dohan, David and Douglas, Sholto and Le, Tuan Anh and Parisi, Aaron and Sountsov, Pavel and Sutton, Charles and Vikram, Sharad and Saurous, Rif A.},
  title     = {Training chain-of-thought via latent-variable inference},
  booktitle = {Advances in Neural Information Processing Systems (NeurIPS)},
  year      = {2023},
}

@article{chen2024language,
  author  = {Chen, Haolin and Feng, Yihao and Liu, Zuxin and Yao, Weiran and Prabhakar, Akshara and Heinecke, Shelby and Ho, Ricky and Mui, Phil and Savarese, Silvio and Xiong, Caiming and Wang, Huan},
  title   = {Language models are hidden reasoners: Unlocking latent reasoning capabilities via self-rewarding},
  journal = {arXiv preprint arXiv:2411.04282},
  year    = {2024},
}

@article{damani2025beyond,
  author  = {Damani, Mehul and Puri, Isha and Slocum, Stewart and Shenfeld, Idan and Choshen, Leshem and Kim, Yoon and Andreas, Jacob},
  title   = {Beyond binary rewards: Training {LMs} to reason about their uncertainty},
  journal = {arXiv preprint arXiv:2507.16806},
  year    = {2025},
}

@article{kadavath2022language,
  author  = {Kadavath, Saurav and others},
  title   = {Language models (mostly) know what they know},
  journal = {arXiv preprint arXiv:2207.05221},
  year    = {2022},
}

@inproceedings{tian2023just,
  author    = {Tian, Katherine and Mitchell, Eric and Zhou, Allan and Sharma, Archit and Rafailov, Rafael and Yao, Huaxiu and Finn, Chelsea and Manning, Christopher D.},
  title     = {Just ask for calibration: Strategies for eliciting calibrated confidence scores from language models fine-tuned with human feedback},
  booktitle = {Conference on Empirical Methods in Natural Language Processing (EMNLP)},
  year      = {2023},
}

@inproceedings{geifman2017selective,
  author    = {Geifman, Yonatan and El-Yaniv, Ran},
  title     = {Selective classification for deep neural networks},
  booktitle = {Advances in Neural Information Processing Systems (NeurIPS)},
  year      = {2017},
}

@inproceedings{wang2024mmlupro,
  author    = {Wang, Yubo and Ma, Xueguang and Zhang, Ge and Ni, Yuansheng and Chandra, Abhranil and Guo, Shiguang and Ren, Weiming and Arulraj, Aaran and He, Xuan and Jiang, Ziyan and Li, Tianle and Ku, Max and Wang, Kai and Zhuang, Alex and Fan, Rongqi and Yue, Xiang and Chen, Wenhu},
  title     = {{MMLU-Pro}: A more robust and challenging multi-task language understanding benchmark},
  booktitle = {Advances in Neural Information Processing Systems (NeurIPS) Datasets and Benchmarks Track},
  year      = {2024},
}

@article{cobbe2021training,
  author  = {Cobbe, Karl and Kosaraju, Vineet and Bavarian, Mohammad and Chen, Mark and Jun, Heewoo and Kaiser, Lukasz and Plappert, Matthias and Tworek, Jerry and Hilton, Jacob and Nakano, Reiichiro and Hesse, Christopher and Schulman, John},
  title   = {Training verifiers to solve math word problems},
  journal = {arXiv preprint arXiv:2110.14168},
  year    = {2021},
}

@inproceedings{nie2020learn,
  author    = {Nie, Yixin and Zhou, Xiang and Bansal, Mohit},
  title     = {What can we learn from collective human opinions on natural language inference data?},
  booktitle = {Conference on Empirical Methods in Natural Language Processing (EMNLP)},
  year      = {2020},
}

@article{qwen3,
  author  = {{Qwen Team}},
  title   = {{Qwen3} technical report},
  journal = {arXiv preprint arXiv:2505.09388},
  year    = {2025},
}

\appendix

\section{Estimators and their exact checks}
\label{app:estimators}

\paragraph{Per-rationale objective with shaping.} For per-rationale terms $s_i$ that depend only on $r_i$ (and $Y$),
e.g.\ $s_i=-\frac{\beta}{M}\log\frac{\pi_\theta(r_i)}{\pi_{\mathrm{ref}}(r_i)}$ or a length cost, the surrogate
\[
\mathcal{L}=-\frac1M\sum_i R_i-\sum_i \mathrm{sg}\Big(\tfrac{R_i}{M}+s_i-\frac{1}{M-1}\sum_{j\ne i}\big(\tfrac{R_j}{M}+s_j\big)\Big)\log\pi_\theta(r_i)
\]
has expected gradient $-\nabla_\theta\big(\E_r J(u_\theta,q)+\E_r s\big)$, because each baseline uses only the other,
independent rationales. \paragraph{Mixture objective.} $\widehat J=\frac2M\sum_i u_i[Y]-\frac{1}{M(M-1)}\sum_{i\ne
j}u_i^\top u_j$ is unbiased for $J(p,q)$ because the rationales are independent; its gradient is the pathwise derivative of
$\widehat J$ plus, for each rationale, a score-function term whose reward is the part of $\widehat J$ that depends on
$r_i$, with a baseline that does not. \paragraph{Checks.} For tabular models ($R\le 4$ rationale types,
$K\le4$, $M\in\{3,4\}$) we enumerate all $R^M$ rationale tuples and all outcomes and compare the expected surrogate
gradient with the exact gradient of the target: the maximum absolute difference is below $10^{-15}$ for the mixture
estimator, the per-rationale estimator, their convex combinations and the shaped versions (\texttt{test\_rlcd\_rb.py}).

\section{Hyperparameters}
\label{app:hparams}

\begin{center}
\small
\begin{tabular}{l p{0.78\linewidth}}
\toprule
model & Qwen3-1.7B, non-thinking chat template; rationales sampled at temperature 1 (bf16 copy) \\
optimizer & AdamW, lr $2\cdot10^{-6}$, no weight decay, gradient clipping 1.0, fp32 weights, bf16 autocast \\
batch & 8 questions $\times$ $M=4$ rationales; rationale length $\le$256 (MMLU-Pro), $\le$320 (GSM8K-Verify) \\
stage 1 & 400 steps, per-rationale Brier (or log) score, pathwise gradient only \\
stage 2 & 400 steps from stage 1, score-function weight $c=0.3$, KL $\beta=0.04$ to the stage-1 model \\
GRPO / RFT & 400 steps, KL $\beta=0.04$ to the base model (RFT also without KL) \\
SFT & direct answer, cross-entropy on the option token, 400 or 800 steps \\
evaluation & 4 rationales per test question; single query = the first; TS: grid search $T\in[0.1,50]$, dev NLL \\
\bottomrule
\end{tabular}
\end{center}

\section{Mixture readout}
\label{app:mixture}

\begin{center}
\small
\resizebox{\linewidth}{!}{\begin{tabular}{l cccc c cccc}
\toprule
 & \multicolumn{4}{c}{GSM8K-Verify} & & \multicolumn{4}{c}{MMLU-Pro} \\
\cmidrule(lr){2-5}\cmidrule(lr){7-10}
Method (mean of $u$ over 4 rationales) & Acc & Brier & AURC & $T$ & & Acc & Brier & AURC & $T$ \\
\midrule
base (zero-shot reasoning) & 88.4 & 0.181 & 0.035 & 5.1 &  & 40.8 & 0.758 & 0.473 & 9.3 \\
RFT/STaR + TS & 91.7{\scriptsize$\pm$1.4} & 0.139{\scriptsize$\pm$0.021} & 0.034{\scriptsize$\pm$0.013} & 2.5 &  & 32.7{\scriptsize$\pm$0.9} & 0.830{\scriptsize$\pm$0.004} & 0.619{\scriptsize$\pm$0.011} & 17.6 \\
RFT/STaR + KL + TS & 93.1{\scriptsize$\pm$0.4} & 0.119{\scriptsize$\pm$0.003} & 0.026{\scriptsize$\pm$0.003} & 3.5 &  & 42.2{\scriptsize$\pm$0.7} & 0.745{\scriptsize$\pm$0.005} & 0.458{\scriptsize$\pm$0.008} & 9.5 \\
GRPO + KL + TS & 93.8{\scriptsize$\pm$0.3} & 0.114{\scriptsize$\pm$0.007} & 0.025{\scriptsize$\pm$0.002} & 3.7 &  & 44.3{\scriptsize$\pm$0.4} & 0.729{\scriptsize$\pm$0.004} & 0.428{\scriptsize$\pm$0.005} & 9.6 \\
RLCD stage 1, Brier readout & 92.8{\scriptsize$\pm$2.0} & 0.111{\scriptsize$\pm$0.025} & 0.019{\scriptsize$\pm$0.006} & 0.9 &  & 49.3{\scriptsize$\pm$1.5} & 0.655{\scriptsize$\pm$0.014} & 0.303{\scriptsize$\pm$0.018} & 1.3 \\
RLCD stage 1, log-score readout & 92.7{\scriptsize$\pm$0.9} & 0.116{\scriptsize$\pm$0.010} & 0.022{\scriptsize$\pm$0.004} & 1.2 &  & 48.6{\scriptsize$\pm$1.1} & 0.655{\scriptsize$\pm$0.007} & 0.311{\scriptsize$\pm$0.009} & 1.4 \\
RLCD two-stage (stage 1 + RLCD-RL) & 94.5{\scriptsize$\pm$0.3} & 0.092{\scriptsize$\pm$0.005} & 0.014{\scriptsize$\pm$0.000} & 1.0 &  & 48.9{\scriptsize$\pm$2.0} & 0.651{\scriptsize$\pm$0.014} & 0.300{\scriptsize$\pm$0.020} & 1.5 \\
\bottomrule
\end{tabular}
}
\end{center}
\noindent Averaging the readout over four rationales improves every reasoning method; the ordering of
Table~\ref{tab:main} is unchanged.

\end{document}